%% file: collas2026_conference.tex
\documentclass{article} 
\usepackage{collas2026_conference,times}
\usepackage{easyReview}

\input{math_commands.tex}

\usepackage{subcaption}
\usepackage{booktabs} 
\usepackage{bm}

\usepackage{hyperref}
\hypersetup{
    colorlinks=true,
    linkcolor=red,
    filecolor=magenta,
    urlcolor=blue,
    citecolor=purple,
    pdftitle={Overleaf Example},
    pdfpagemode=FullScreen,
    }

\title{Explaining $f$-Divergence-Based Regularization via Local Curvature and Sharpness-Aware Minimization}

\author{Nour Jamoussi \\
Communication Systems Department\\
EURECOM \\
France \\
\texttt{nour.jamoussi@eurecom.fr} \\
\And 
Marios Kountouris  \\
Department of Computer Science and Artificial Intelligence, \\
University of Granada, Spain \\
Communication Systems Department, EURECOM, France \\
\texttt{mariosk@ugr.es} \\
}

\collasfinalcopy 

\begin{document}

\maketitle

\begin{abstract}
Divergence-based regularization and Sharpness-Aware Minimization (SAM) are two prominent approaches for improving generalization in deep learning, both motivated by robustness to perturbations. However, their relationship has remained largely unexplored. Building on classical second-order expansions of $f$-divergences, we show that the two methods are locally consistent under parameter-space perturbations: both induce curvature-sensitive penalties, with divergence regularization yielding a Fisher-weighted quadratic form and SAM penalizing sharpness through the dominant Hessian eigenvalue. For negative log-likelihood objectives with exponential-family output distributions, this correspondence becomes especially transparent, since the Fisher and Gauss-Newton matrices coincide. We further show that the same local geometric perspective extends to input-space perturbations, where divergence-based regularization is defined through transformations of the input. In this setting, the regularizer induces a pullback quadratic form on the input space, providing a more general perturbation framework than standard SAM while preserving the same local sensitivity interpretation. To validate the analysis empirically, we use the asymmetric $\alpha$-skew Jensen-Shannon divergence (JSD) family as a controlled testbed. Its local curvature coefficient scales as $\alpha(1-\alpha)$ and is maximized at the symmetric point $\alpha=\tfrac12$, which recovers the standard JSD. Loss-landscape visualizations in the input-perturbation regime show that stronger induced curvature penalization is associated with flatter local minima. Experiments on four benchmark datasets further demonstrate that both accuracy and negative log-likelihood are consistently best near this regime of maximal curvature penalization.
\end{abstract}

\section{Introduction}

For lifelong learning agents, generalization is not limited to performance on a fixed held-out distribution; it also requires sustaining reliable predictions as the agent encounters a continuing stream of changing observations, tasks, and environments. This makes controlling overfitting and local sensitivity especially important: a model that fits the current experience too sharply may fail to transfer to future observations or become brittle under the small distributional shifts that arise over time.

More broadly, generalization is a central objective in machine learning: a model should not merely fit the training data, but also perform reliably on unseen examples. Achieving this requires understanding and controlling the factors that contribute to overfitting. A rich body of work has been developed to address this challenge, spanning data augmentation~\citep{NIPS2012_c399862d,zhang2017mixup,shorten2019survey} and regularization~\citep{hoerl1970ridge,tibshirani1996regression,srivastava2014dropout} methods designed to favor robust solutions.

Divergence-based regularization~\citep{miyato2015distributional,xie2020unsupervised,wei2020optimizing,li2025regularization,sevillano2026x} occupies a distinctive position at the intersection of these two paradigms: by penalizing discrepancies between a model's predictive distributions on the original input and its perturbed counterpart, it simultaneously encourages output consistency under input transformations and imposes a form of distributional smoothness. In this sense, it can be understood as a combination of implicit data augmentation and explicit regularization, operating directly on the output distribution.

A complementary perspective on generalization arises from the geometry of the loss landscape. Sharpness-Aware Minimization (SAM)~\citep{foret2020sharpness} is grounded in the observation that flat minima tend to generalize better than sharp ones. SAM formalizes this intuition by solving a min-max problem that seeks parameters whose entire neighborhood exhibits low training loss, thereby explicitly penalizing curvature in parameter space.

Despite their shared motivation, the relationship between divergence-based regularization and SAM has not been fully characterized from a local second-order geometric perspective. In particular, it remains unclear whether the local geometry induced by a divergence regularizer can be related to the Hessian structure that SAM implicitly controls.

This paper addresses this gap by analyzing divergence-based regularization through the lens of local second-order geometry and by deriving a formal connection to SAM. Our main contributions are as follows:

\begin{itemize}
    \item We derive a unified local second-order expansion for $f$-divergence regularization, showing that small perturbations induce quadratic penalties determined by Fisher geometry in parameter space and by a pullback metric in input space. 
    \item We show that, in the parameter-perturbation setting and under standard probabilistic-loss assumptions, this geometry becomes locally comparable to second-order interpretations of SAM through the Fisher/Generalized Gauss-Newton/Hessian relationship.
    \item Using the asymmetric $\alpha$-skew Jensen-Shannon divergence (JSD) family as a controlled example, we derive the local curvature coefficient and empirically show how the symmetric regime relates to predictive performance and local landscape flatness. Our empirical study focuses primarily on the input-perturbation regime; accordingly, it validates the broader local-curvature perspective rather than a direct experimental equivalence to SAM.
\end{itemize}

The remainder of the paper is organized as follows. Section~\ref{section:related_work} reviews the related literature. In Section~\ref{section:general_framework} we present the general framework for $f$-divergence regularization and its local quadratic geometry, including an analysis of a controlled example based on the asymmetric $\alpha$-skew JSD family. In Section~\ref{section:local_connection_to_sam} we derive the local connection between $f$-divergence regularization and SAM via an analysis of the curvature component of the loss. In Section~\ref{section:experiments} we present empirical validation of the effect of divergence-based regularization curvature on the loss landscape and model performance. Section~\ref{section:limitations} discusses the limitations of this analysis for divergence selection in regularization.

\section{Related Work}
\label{section:related_work}

\paragraph{Divergence-based regularization and information-geometric penalties.}
A first line of related work focuses on regularizing learning by penalizing discrepancies between predictive distributions under perturbations, transformations, or auxiliary constraints. In particular, KL-based and symmetric-KL-based penalties have been widely used to enforce output consistency under perturbed inputs or adversarial examples, as in virtual adversarial training~\citep{miyato2015distributional}, unsupervised data augmentation~\citep{xie2020unsupervised} and explainability improvements ~\citep{sevillano2026x}.  Beyond KL-based constructions, more general $f$-divergence regularizers have also been explored, for example in fairness-constrained learning~\citep{zhong2023learning} and learning with noisy supervision~\citep{wei2020optimizing,li2025regularization}. A related information-geometric perspective is provided by Fisher-Rao regularization~\citep{picot2022adversarial}, which penalizes distributional changes through the Fisher-Rao metric in order to improve adversarial robustness.

\paragraph{Sharpness-aware minimization.}
A second line of related work studies robustness to parameter perturbations through sharpness-aware optimization. SAM~\citep{foret2020sharpness} formulates training as the minimization of the worst-case loss in a neighborhood of the parameters, leading to a min--max objective that favors flat regions of the loss landscape. Subsequent work has refined this perspective in several directions. Fisher SAM~\citep{kim2022fisher} replaces the Euclidean neighborhood of SAM with a geometry adapted to the Fisher metric, thereby incorporating information-geometric structure into the sharpness-aware perturbation set. CR-SAM~\citep{wu2024cr} augments SAM with an explicit curvature regularizer based on Hessian trace information.
On the theoretical side, recent analyses~\citep{wen2022does,agarwala2023sam} have clarified the notion of sharpness implicitly induced by SAM and its relation to Hessian eigenvalues during training.

Our work lies at the intersection of these two lines of research. By studying the local second-order geometry induced by divergence-based regularization, we clarify its connection to sharpness-aware minimization from a local geometric perspective. 
Unlike Fisher SAM, which changes the perturbation geometry inside a SAM objective, our analysis starts from divergence regularization and studies the geometry it induces locally. Unlike Fisher-Rao regularization, which directly imposes an information-geometric penalty, we derive a general local expansion valid for $f$-divergences and relate its parameter-space form to second-order interpretations of SAM. Our contribution is therefore primarily analytical: we clarify when divergence-based regularization, Fisher geometry, and sharpness-aware objectives become locally comparable.

\section{A General Framework for $f$-Divergence Regularization}
\label{section:general_framework}
Let $\mathcal{S} = \{(x_i, y_i)\}_{i=1}^n$ be a training dataset drawn i.i.d.\ from a distribution $\mathcal{D}$. The objective is to learn a model that generalizes well beyond this sample. To this end, we consider a family of models $f_\theta : \mathcal{X} \to \mathcal{P}(\mathcal{Y})$ parameterized by $\theta \in \Theta \subseteq \mathbb{R}^d$, together with a per-sample loss function $\ell : \Theta \times \mathcal{X} \times \mathcal{Y} \to \mathbb{R}_+$.

Based on this setup, we define the empirical training loss as
$
L_{\mathcal{S}}(\theta) \triangleq \frac{1}{n} \sum_{i=1}^n \ell(\theta, x_i, y_i),
$
and the population loss as
$
L_{\mathcal{D}}(\theta) \triangleq \mathbb{E}_{(x,y)\sim \mathcal{D}}[\ell(\theta,x, y)].
$ While the population loss characterizes generalization performance, it cannot be computed directly since $\mathcal{D}$ is unknown and only the sample $\mathcal{S}$ is observed. Consequently, learning proceeds by minimizing the empirical loss $L_{\mathcal{S}}(\theta)$ as a proxy for the population loss $L_{\mathcal{D}}(\theta)$.

However, directly minimizing $L_{\mathcal{S}}(\theta)$ may lead to overfitting. To mitigate this issue and promote generalization, it is standard to augment the optimization objective with a regularization term. In its most general form, the resulting learning problem can be written as
\begin{align}
    \min_{\theta \in \Theta} \; L_{\mathcal{S}}(\theta) + \lambda \, \mathcal{R}_{\mathcal{S}}(\theta),
\end{align}
where $\mathcal{R}_{\mathcal{S}} : \Theta \to \mathbb{R}_+$ denotes a regularization functional and $\lambda \geq 0$ is a hyperparameter controlling the trade-off between data fitting and the strength of regularization.

The regularization term may be independent of the training data, in which case it is typically written as $\mathcal{R}(\theta)$, omitting any explicit dependence on $\mathcal{S}$; common examples include ridge and Lasso regularization~\citep{hoerl1970ridge, tibshirani1996regression}. In other settings, however, the regularization term explicitly depends on the data, for instance by acting on the model outputs evaluated on the inputs, as in approaches based on the Fisher-Rao metric~\citep{picot2022adversarial}. To encompass both scenarios, and in particular to capture data-dependent regularization schemes, we adopt the more general notation $\mathcal{R}_{\mathcal{S}}(\theta)$ throughout this work.

In this study, we focus on divergence-based regularization, and in particular on the family of $f$-divergences, denoted by $D_{\phi}$. An $f$-divergence between two distributions $P$ and $Q$, with densities $p$ and $q$, respectively, is defined as
\begin{align}
    D_{\phi}(P \,\|\, Q) \triangleq \int q(y)\,\phi\!\left(\frac{p(y)}{q(y)}\right)\,dy,
\end{align}
where $\phi : \mathbb{R}_+ \to \mathbb{R}$ is a convex function satisfying $\phi(1) = 0$.

We consider regularization terms of the form
\begin{align}
    \mathcal{R}_{\mathcal{S}}(\theta) = D_{\phi}(P \,\|\, Q),
\end{align}
which gives rise to the divergence-based training objective 
\begin{align}
\label{eq:div_obj}
    L_{\mathcal S}^{\mathrm{Div}}(\theta)
    \triangleq
    L_{\mathcal S}(\theta)+\lambda\,D_\phi(P\|Q),
\end{align}
where $P$ denotes the output distribution induced by the model $f_\theta$, and $Q$ denotes a perturbed output distribution. Such perturbations may arise from input transformations, in which case $Q = f_\theta(T(x))$, from parameter perturbations, in which case $Q = f_{\theta+\delta}(x)$, or, more generally, from a combination of both, yielding $Q = f_{\theta+\delta}(T(x))$.

\begin{remark}
This regularization applies to predictive models whose outputs can be interpreted as probability distributions over the target space. This includes standard classification models, as well as probabilistic regression models.
\end{remark}

To better understand the effect of these regularization schemes, we analyze the local behavior of $f$-divergences when the two distributions $P$ and $Q$ are close. This regime is of particular interest in our setting, as the perturbations considered, whether induced in input space or parameter space, typically generate only small deviations in the model outputs. Studying this local geometry allows us to characterize how divergence-based regularization shapes the optimization landscape and influences robustness.

\subsection{Local Quadratic Geometry of $f$-Divergences}
We now analyze the local behavior of $f$-divergences in the regime where the two distributions $P$ and $Q$ are close. To formalize this, we consider a reference distribution $P$ with density $p$, and a perturbed distribution $Q$ with density $q = p + \varepsilon$, where $\varepsilon$ is a small perturbation satisfying $\int \varepsilon(x)\,dx = 0$. Under this assumption, we study the second-order expansion of the divergence $D_{\phi}(P \,\|\, Q)$.

\begin{proposition}[Local quadratic approximation of $f$-divergences]
\label{proposition:Local_approx_of_f-div}
Let $P$ and $Q$ be two distributions with densities $p$ and $q$, and assume that $q = p + \varepsilon$, where $\varepsilon$ is sufficiently small and satisfies $\int \varepsilon(y)\,dy = 0$. Then the $f$-divergence admits the second-order expansion
\begin{align}
    D_{\phi}(P \,\|\, Q)
    =
    \frac{\phi''(1)}{2} \int \frac{\varepsilon(y)^2}{p(y)}\,dy
    + o(\|\varepsilon\|^2).
\end{align}
\end{proposition}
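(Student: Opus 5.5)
We will Taylor-expand $\phi$ around $1$ inside the integral defining $D_\phi(P\|Q)$. Write the likelihood ratio as
\begin{align*}
\frac{p(y)}{q(y)} = \frac{p(y)}{p(y)+\varepsilon(y)} = 1 + u(y), \qquad u(y) \triangleq -\frac{\varepsilon(y)}{p(y)+\varepsilon(y)} ,
\end{align*}
so that $u$ is small wherever $\varepsilon/p$ is small. We assume $\phi$ is twice differentiable at $1$ (more conveniently, $C^2$ near $1$). We also interpret ``$\varepsilon$ sufficiently small'' as requiring $\sup_y |\varepsilon(y)|/p(y) \to 0$, with $\|\varepsilon\|^2$ understood as the $\chi^2$-type norm $\int \varepsilon^2/p$. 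These are the hypotheses we must make explicit.

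Taylor's theorem with Peano remainder gives
\begin{align*}
\phi(1+u) = \phi'(1)\,u + \tfrac{\phi''(1)}{2}u^2 + r(u)\,u^2, \qquad r(u)\to 0 \text{ as } u \to 0,
\end{align*}
using $\phi(1)=0$. Multiply by $q = p+\varepsilon$ and integrate. Three terms result.

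\textbf{First-order term.} Since $q\,u = -\varepsilon$, we get $\int q\,\phi'(1)u = -\phi'(1)\int\varepsilon = 0$ by the normalization assumption. This is where $\int \varepsilon = 0$ is used, and it is why no linear term appears.

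\textbf{Quadratic term.} We have $q\,u^2 = \varepsilon^2/(p+\varepsilon) = \frac{\varepsilon^2}{p}\cdot\frac{1}{1+\varepsilon/p}$. Hence $\bigl|q u^2 - \varepsilon^2/p\bigr| \le \frac{\varepsilon^2}{p}\cdot \frac{|\varepsilon/p|}{1-|\varepsilon/p|}$, and integrating gives $\int q u^2 = \int \varepsilon^2/p + o(\|\varepsilon\|^2)$. The quadratic term therefore contributes $\frac{\phi''(1)}{2}\int \varepsilon^2/p + o(\|\varepsilon\|^2)$.

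\textbf{Remainder term.} Here $\bigl|\int q\, r(u) u^2\bigr| \le \sup_y |r(u(y))| \int q u^2$. The supremum tends to $0$ because $\sup|u|\to 0$, and $\int q u^2 = O(\|\varepsilon\|^2)$ by the previous step. So this term is also $o(\|\varepsilon\|^2)$.

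The main obstacle is the remainder control, i.e.\ making ``sufficiently small'' precise enough to exchange the pointwise Taylor remainder with the integral. The uniform ratio condition $\sup|\varepsilon/p|\to0$ handles this cleanly. If only an $L^2(1/p)$ smallness were available, we would instead use dominated convergence along a sequence $\varepsilon_t = t\,\eta$ with $t\to0$. In the finite-output classification setting relevant to the paper, with $p$ having full support, all norms are equivalent and the uniform condition follows automatically. Combining the three terms gives the stated expansion.
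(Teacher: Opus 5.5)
Your proof is correct. The paper does not actually prove this proposition. It defers to Corollary~1 of Nielsen and Nock, where the quadratic term comes from truncating a Taylor expansion of the generator around $1$, with no explicit remainder estimate, and where the chi-square is normalized by the density that multiplies $\phi$ in the integral. For $D_\phi(P\|Q)=\int q\,\phi(p/q)$ that recipe naturally yields $\frac{\phi''(1)}{2}\int \varepsilon^2/q$. The paper's phrase ``in an equivalent notation'' quietly replaces this by $\int \varepsilon^2/p$.

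Your Peano-remainder argument is self-contained and needs only twice-differentiability of $\phi$ at $1$. It also makes ``sufficiently small'' and $\|\varepsilon\|$ precise. Your estimate $\bigl|\varepsilon^2/(p+\varepsilon)-\varepsilon^2/p\bigr|\le \frac{s}{1-s}\,\varepsilon^2/p$, with $s=\sup_y|\varepsilon(y)|/p(y)$, is exactly the step that justifies that replacement. Your bounds in fact give $\bigl|D_\phi(P\|Q)-\frac{\phi''(1)}{2}\int\varepsilon^2/p\bigr|\le C(s)\int\varepsilon^2/p$, with $C(s)\to 0$ depending only on $s$ and $\phi$. That is a remainder uniform over perturbation directions, which is the form the two corollaries implicitly need when they substitute $\varepsilon=d_\theta p[\delta]+o(\|\delta\|)$. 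What the citation offers in exchange is brevity and the full series in higher-order chi-type distances. That series is the natural tool for the paper's remark that differences between divergences only appear beyond second order.

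One side remark of yours should be corrected. Without the uniform ratio condition, dominated convergence along $\varepsilon_t=t\eta$ does not rescue the result for general $\phi$. Take $\phi(t)=t^{-2}+2t-3$, which is convex with $\phi(1)=\phi'(1)=0$ and $\phi''(1)=6$. Then exactly
\[
D_\phi(P\|Q)=3\int\frac{\varepsilon^2}{p}+\int\frac{\varepsilon^3}{p^2}.
\]
On $\mathbb N$, let $p_k\propto k^{-4}$ and $\eta_k\propto k^{-3}$ for $k\ge 2$, compensated at $k=1$. Then $\int\eta^2/p<\infty$ but $\int\eta^3/p^2=\infty$, so the divergence is infinite for every $t>0$. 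A condition like yours, or a growth restriction on $\phi$ near $0$, is therefore genuinely necessary rather than merely convenient. As you note, it holds automatically in the finite, full-support classification setting.
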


This local quadratic approximation is classical; in an equivalent notation, it appears as a second-order expansion of $f$-divergences into a scaled chi-square term (see Corollary 1 in~\citep{nielsen2013chi}).

Proposition~\ref{proposition:Local_approx_of_f-div} shows that, in the local regime where $P$ and $Q$ are close, any $f$-divergence reduces to a quadratic penalty on the perturbation of the predictive distribution. Up to the multiplicative constant $\phi''(1)$, all $f$-divergences therefore share the same second-order behavior. In particular, the divergence no longer depends on the full nonlinear form of $\phi$, but only on the magnitude of the change in the output distribution under small perturbations.

The following corollaries make explicit how the second-order expansion of an $f$-divergence can be written when the perturbation arises either in parameter space or in input space. These results will be used later to relate divergence-based regularization to sharpness-aware objectives.

\begin{corollary}[Fisher form in parameter space]
\label{corollary:fisher-form}
Let $f_\theta : \mathcal X \to \mathcal P(\mathcal Y)$ be differentiable with respect to $\theta$ at a given parameter value $\theta$, and fix an input $x \in \mathcal X$. Define
\[
P = f_\theta(x), \qquad Q = f_{\theta+\delta}(x),
\]
where $\delta \in \mathbb R^d$ is a sufficiently small parameter perturbation. Assume that $P$ and $Q$ admit densities $p$ and $q$ with respect to a common reference measure on $\mathcal Y$, and that $\{f_\theta(x)\}_{\theta \in \Theta}$ forms a smooth parametric family with Fisher information matrix $F_x(\theta)$ evaluated at $x$. Then
\begin{align}
D_\phi(P \,\|\, Q)
=
\frac{\phi''(1)}{2}\,
\delta^\top F_x(\theta)\,\delta
+ o(\|\delta\|^2).
\end{align}
\end{corollary}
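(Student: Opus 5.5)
The plan is to reduce the statement to Proposition~\ref{proposition:Local_approx_of_f-div} by Taylor-expanding the perturbed density in $\delta$. Write $p_\theta(y) := p(y \mid x;\theta)$ for the density of $f_\theta(x)$. Set $\varepsilon(y) := p_{\theta+\delta}(y) - p_\theta(y)$. Then $q = p + \varepsilon$, and $\int \varepsilon\,dy = 1 - 1 = 0$, so the hypotheses of Proposition~\ref{proposition:Local_approx_of_f-div} are met once we know $\varepsilon$ is small. Smoothness of the family gives
\[
\varepsilon(y) = \nabla_\theta p_\theta(y)^\top \delta + r_\delta(y),
\]
where $r_\delta = O(\|\delta\|^2)$ pointwise. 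In particular $\|\varepsilon\| = O(\|\delta\|)$, so the remainder $o(\|\varepsilon\|^2)$ in Proposition~\ref{proposition:Local_approx_of_f-div} becomes $o(\|\delta\|^2)$.

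Next, I substitute this expansion into the chi-square-type integral:
\[
\int \frac{\varepsilon(y)^2}{p_\theta(y)}\,dy
= \int \frac{\bigl(\nabla_\theta p_\theta(y)^\top\delta\bigr)^2}{p_\theta(y)}\,dy + O(\|\delta\|^3).
\]
The $O(\|\delta\|^3)$ collects the cross term $2(\nabla p^\top\delta)\,r_\delta/p$ and the term $r_\delta^2/p$. For the leading term I use $\nabla_\theta p_\theta = p_\theta \nabla_\theta \log p_\theta$, which turns it into
\[
\delta^\top\Bigl(\int p_\theta\, \nabla_\theta \log p_\theta\, \nabla_\theta \log p_\theta^\top\,dy\Bigr)\delta = \delta^\top F_x(\theta)\,\delta,
\]
by the definition of the Fisher information of the family $\{f_\theta(x)\}$ at $x$. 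Multiplying by $\phi''(1)/2$ gives the claim.

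The main obstacle is justifying the remainder bounds uniformly in $y$. Pointwise Taylor bounds on $\varepsilon$ must be integrated against the weight $1/p_\theta$, which can blow up where $p_\theta$ is small. The norm $\|\varepsilon\|$ in Proposition~\ref{proposition:Local_approx_of_f-div} must also be controlled by $\|\delta\|$. I would handle this by reading "smooth parametric family" in the usual regular sense. First, $p_\theta$ is twice continuously differentiable in $\theta$ with integrable domination of $\nabla^2_\theta p_{\theta'}(y)^2/p_\theta(y)$ for $\theta'$ near $\theta$. Second, the Fisher information is finite. Cauchy--Schwarz then bounds the cross term by
\[
\bigl(\delta^\top F_x\delta\bigr)^{1/2}\Bigl(\int r_\delta^2/p_\theta\Bigr)^{1/2} = O(\|\delta\|^3).
\]
In the finite-output classification case (softmax outputs with $p_\theta(y)>0$ for all classes), all of this is immediate, since the integrals are finite sums and $p_\theta$ is bounded away from zero. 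Finally, I would note that $\varepsilon$ depends smoothly on $\delta$. The composite map $\delta \mapsto D_\phi(f_\theta(x)\|f_{\theta+\delta}(x))$ therefore has zero value and zero gradient at $\delta=0$, with Hessian $\phi''(1)F_x(\theta)$. This gives an alternative route via a direct second-order Taylor expansion in $\delta$, which I would use as a consistency check.
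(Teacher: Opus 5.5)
Your proposal is correct and follows essentially the same route as the paper's proof: set $\varepsilon = q - p$, linearize it as $\nabla_\theta p(y)^\top \delta$, apply Proposition~\ref{proposition:Local_approx_of_f-div}, and recognize $\int \nabla_\theta p\,\nabla_\theta p^\top / p \, dy$ as $F_x(\theta)$. The only difference is in your favor: the paper replaces $\varepsilon^2$ by $(\nabla_\theta p^\top\delta)^2$ inside the $1/p$-weighted integral without comment, whereas your Cauchy--Schwarz bound on the cross term, together with your domination assumption on $r_\delta^2/p$, actually shows that this replacement costs only $O(\|\delta\|^3)$.
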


\begin{corollary}[Pullback form in input space]
\label{corollary:pullback-form}
Let $f_\theta : \mathcal X \to \mathcal P(\mathcal Y)$ be differentiable with respect to the input at a given point $x \in \mathcal X$, and let $T(x)=x+\Delta x$ be a sufficiently small input perturbation. Define
\[
P = f_\theta(x), \qquad Q = f_\theta(T(x)).
\]
Assume that $P$ and $Q$ admit densities $p$ and $q$ with respect to a common reference measure on $\mathcal Y$, and that the differential of the density satisfies
\[
d_x p(y)[\Delta x] = \nabla_x p(y)^\top \Delta x.
\]
Define
\[
\Gamma_x(\theta)
= 
\int
\frac{\bigl(\nabla_x p(y)\bigr)\bigl(\nabla_x p(y)\bigr)^\top}{p(y)}\,dy.
\]
Then
\begin{align}
    D_\phi(P \,\|\, Q)
=
\frac{\phi''(1)}{2}\,
\Delta x^\top \Gamma_x(\theta)\,\Delta x
+ o(\|\Delta x\|^2).
\end{align}
\end{corollary}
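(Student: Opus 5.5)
The plan is to reduce the statement to Proposition~\ref{proposition:Local_approx_of_f-div} by identifying the density perturbation induced by the input shift and expanding it to first order in $\Delta x$. Write $p(y)=p_x(y)$ for the density of $f_\theta(x)$ and $q(y)=p_{x+\Delta x}(y)$, and set $\varepsilon(y)\triangleq q(y)-p(y)$. Since both are probability densities with respect to the same reference measure, $\int\varepsilon(y)\,dy=0$. Differentiability of $x\mapsto p_x(y)$ together with the stated form of the differential gives
\[
\varepsilon(y)=\nabla_x p(y)^\top\Delta x + r(y,\Delta x),
\]
where the remainder satisfies $r(\cdot,\Delta x)=o(\|\Delta x\|)$. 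In particular $\|\varepsilon\|=O(\|\Delta x\|)$, so $\varepsilon$ is small whenever $\Delta x$ is, and the hypotheses of Proposition~\ref{proposition:Local_approx_of_f-div} hold. That proposition yields
\[
D_\phi(P\|Q)=\frac{\phi''(1)}{2}\int\frac{\varepsilon(y)^2}{p(y)}\,dy+o(\|\varepsilon\|^2),
\]
and the error term $o(\|\varepsilon\|^2)$ is already $o(\|\Delta x\|^2)$.

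The next step is to substitute the linearization into the chi-square integral and expand the square:
\begin{align*}
\int\frac{\varepsilon(y)^2}{p(y)}\,dy
&=\int\frac{\bigl(\nabla_x p(y)^\top\Delta x\bigr)^2}{p(y)}\,dy
+2\int\frac{\bigl(\nabla_x p(y)^\top\Delta x\bigr)\,r(y,\Delta x)}{p(y)}\,dy
+\int\frac{r(y,\Delta x)^2}{p(y)}\,dy .
\end{align*}
Writing $(\nabla_x p^\top\Delta x)^2=\Delta x^\top\bigl(\nabla_x p\,\nabla_x p^\top\bigr)\Delta x$ and pulling the constant vector $\Delta x$ out of the integral turns the first term into exactly $\Delta x^\top\Gamma_x(\theta)\Delta x$. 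This is the same computation that turns the chi-square term into the Fisher form in Corollary~\ref{corollary:fisher-form}, with $\theta$ replaced by $x$.

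The main obstacle is showing that the two cross/remainder terms are $o(\|\Delta x\|^2)$. This requires that the pointwise remainder be controlled in the $L^2(1/p)$ norm, i.e.\ that the chi-square norm of $r(\cdot,\Delta x)$ is $o(\|\Delta x\|)$. I would impose this as a regularity condition: differentiability of $x\mapsto\sqrt{p_x}$ in $L^2$, or equivalently a dominated-convergence bound on $\nabla_x p/\sqrt{p}$ in a neighbourhood of $x$. This is the input-space analogue of the smooth-family assumption used for the Fisher matrix, and it also guarantees $\Gamma_x(\theta)<\infty$.

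Given that condition, Cauchy--Schwarz in $L^2(1/p)$ bounds the cross term by $2\sqrt{\Delta x^\top\Gamma_x\Delta x}\,\|r\|_{L^2(1/p)}=O(\|\Delta x\|)\,o(\|\Delta x\|)$. The last term is $o(\|\Delta x\|^2)$ directly. In the common case of a finite label space $\mathcal Y$ with $p(y)>0$, such as a softmax classifier, all of this is immediate from ordinary Taylor expansion, since the integral is a finite sum with denominators bounded away from zero. Combining the pieces gives $D_\phi(P\|Q)=\tfrac{\phi''(1)}{2}\Delta x^\top\Gamma_x(\theta)\Delta x+o(\|\Delta x\|^2)$.
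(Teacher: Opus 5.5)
Your proposal is correct and takes the same route as the paper: set $\varepsilon=q-p$, apply Proposition~\ref{proposition:Local_approx_of_f-div}, linearize $\varepsilon$ through the input differential, and recognize $\Delta x^\top\Gamma_x(\theta)\Delta x$. You are more careful than the paper, which puts the linearization into the chi-square integral without justification, whereas you isolate the needed $L^2(1/p)$ control of the remainder and handle the cross terms by Cauchy--Schwarz; one correction is that $L^2$-differentiability of $\sqrt{p_x}$ is not equivalent to that chi-square remainder bound (it gives only Hellinger-type control and can hold while $\int \varepsilon^2/p=\infty$), so take the dominated bound, or the $L^2(1/p)$ condition itself, as your assumption.
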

Both corollaries follow by applying Proposition~\ref{proposition:Local_approx_of_f-div} to the perturbation induced, respectively, by the parameter differential and the input differential, and then identifying the resulting quadratic form with the Fisher information matrix in parameter space and with its pullback analogue in input space. Detailed proofs are provided in Appendix~\ref{app:fisher-form} and Appendix~\ref{app:pullback-form}, respectively. 

In the parameter-perturbation setting, the divergence penalizes local variations of the predictive distribution induced by perturbations of $\theta$, yielding the Fisher-information form of Corollary~\ref{corollary:fisher-form}. In the input-perturbation setting, it penalizes local variations induced by perturbations of $x$, leading to the pullback metric of Corollary~\ref{corollary:pullback-form}. Taken together, these results show that divergence-based regularization acts as a local sensitivity penalty: it favors models whose predictive distributions remain stable under small perturbations, whether applied to the parameters or to the inputs. This interpretation will be central in the sequel, where we show that, in the parameter-perturbation setting, divergence-based regularization induces a local objective closely related to sharpness-aware minimization.

\subsection{Example: Curvature Analysis in the Asymmetric $\alpha$-Skew JSD Family}

Building on the local quadratic analysis above, we now consider a subfamily of $f$-divergences, namely the asymmetric $\alpha$-skew JSD family introduced in~\citep{nielsen2020generalization}. This family provides a convenient one-parameter interpolation within a fixed class of divergences, allowing us to compare how the curvature induced by the regularization varies with $\alpha$. Although the divergence itself changes with $\alpha$, it does so within a single parametric family, thereby enabling a controlled comparison of local curvature effects across different degrees of asymmetry.

The asymmetric $\alpha$-skew JSD is defined as
\begin{align}
\mathrm{JSD}^\alpha_a(P \,\|\, Q)
&\triangleq (1-\alpha)\, D_{\mathrm{KL}}\!\bigl(P \,\big\|\, (1 - \alpha) P + \alpha Q\bigr)
+ \alpha\, D_{\mathrm{KL}}\!\bigl(Q \,\big\|\, (1 - \alpha) P + \alpha Q\bigr),
\quad \alpha \in (0,1).
\end{align}

It satisfies the identity
\begin{align}
    \mathrm{JSD}^\alpha_a(P \,\|\, Q) = \mathrm{JSD}^{1-\alpha}_a(Q \,\|\, P),
\end{align}

which implies that $\mathrm{JSD}^\alpha_a$ is symmetric in $(P,Q)$ only for $\alpha=\tfrac{1}{2}$, in which case it reduces to the standard JSD. Moreover, the divergence approaches $0$ in the limits $\alpha \to 0$ and $\alpha \to 1$.

As $\alpha$ varies from $0$ to $1$, $\mathrm{JSD}^\alpha_a$ continuously shifts from emphasizing
$D_{\mathrm{KL}}\!\bigl(P \,\|\, (1-\alpha)P + \alpha Q\bigr)$
to emphasizing
$D_{\mathrm{KL}}\!\bigl(Q \,\|\, (1-\alpha)P + \alpha Q\bigr)$ (see Appendix~\ref{app:alphaJSD}). 
This makes the family particularly suitable for analyzing how the curvature of the induced regularization evolves under a controlled change in asymmetry.

We now make this dependence explicit by deriving how the local second-order coefficient depends on $\alpha$ within this family.

\begin{proposition}[Curvature of the asymmetric $\alpha$-skew JSD family]
\label{proposition:curv_asymm_jsd}
Let $P$ and $Q$ be two distributions with densities $p$ and $q$, and assume that
$$
q(y) = p(y) + \varepsilon(y),
$$
where $\varepsilon$ is sufficiently small and satisfies
$$
\int \varepsilon(y)\,dy = 0.
$$
Then, the asymmetric $\alpha$-skew JSD admits the second-order expansion
\begin{align}
    \mathrm{JSD}_a^\alpha(P \,\|\, Q)
    =
    \frac{\alpha(1-\alpha)}{2}
    \int \frac{\varepsilon(y)^2}{p(y)}\,dy
    + o(\|\varepsilon\|^2).
\end{align}
In particular, the local curvature coefficient is proportional to $\alpha(1-\alpha)$, and is therefore maximized at $\alpha=\tfrac{1}{2}$.
\end{proposition}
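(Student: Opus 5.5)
Our plan is to exhibit $\mathrm{JSD}^\alpha_a(P\|Q)$ as an $f$-divergence $D_{\phi_\alpha}(P\|Q)$ for an explicit convex generator $\phi_\alpha$ with $\phi_\alpha(1)=0$, compute $\phi_\alpha''(1)=\alpha(1-\alpha)$, and then invoke Proposition~\ref{proposition:Local_approx_of_f-div}. Write $m=(1-\alpha)p+\alpha q$ and $t=p/q$, so that $m=q\bigl((1-\alpha)t+\alpha\bigr)$. Then
\begin{align*}
(1-\alpha)\,p\log\frac{p}{m}+\alpha\,q\log\frac{q}{m}
= q\Bigl[(1-\alpha)\,t\log\frac{t}{(1-\alpha)t+\alpha}-\alpha\log\bigl((1-\alpha)t+\alpha\bigr)\Bigr].
\end{align*}
This lets us set $\phi_\alpha(t)=(1-\alpha)t\log t-\bigl((1-\alpha)t+\alpha\bigr)\log\bigl((1-\alpha)t+\alpha\bigr)$, which gives $\mathrm{JSD}^\alpha_a(P\|Q)=\int q\,\phi_\alpha(p/q)\,dy$. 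Clearly $\phi_\alpha(1)=0$.

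Next we differentiate twice. Writing $u=(1-\alpha)t+\alpha$, we get $\phi_\alpha'(t)=(1-\alpha)(\log t+1)-(1-\alpha)(\log u+1)$ and $\phi_\alpha''(t)=\frac{1-\alpha}{t}-\frac{(1-\alpha)^2}{u}=\frac{(1-\alpha)\alpha}{t\,u}\ge 0$. This confirms that $\phi_\alpha$ is convex on $\mathbb{R}_+$, so it is a valid generator. At $t=1$ we have $u=1$, hence $\phi_\alpha''(1)=\alpha(1-\alpha)$.

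Proposition~\ref{proposition:Local_approx_of_f-div} then yields the claimed expansion with coefficient $\alpha(1-\alpha)/2$. Finally, $\alpha(1-\alpha)$ is a concave quadratic on $(0,1)$ whose derivative $1-2\alpha$ vanishes only at $\alpha=\tfrac12$, so the coefficient is maximized there, with value $1/4$.

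The main point of care is the first step: the algebraic identification of the mixture-KL expression as a single $f$-divergence in the orientation $D_\phi(P\|Q)=\int q\,\phi(p/q)$ that Proposition~\ref{proposition:Local_approx_of_f-div} uses, since the perturbation there is placed on $q$. If that bookkeeping turned out awkward, the fallback would be a direct Taylor expansion. One expands each KL term in $\varepsilon$, using that $\frac{p}{m}=1-\frac{\alpha\varepsilon}{p}+O(\varepsilon^2)$. The linear terms then vanish by $\int\varepsilon=0$, and the quadratic terms collect as $\frac{1}{2}\bigl[(1-\alpha)\alpha^2+\alpha(1-\alpha)^2\bigr]\int\varepsilon^2/p=\frac{\alpha(1-\alpha)}{2}\int\varepsilon^2/p$. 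This gives the same coefficient and serves as a cross-check.
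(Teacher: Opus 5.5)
Your proof is correct, but it takes a different route from the paper. Your main argument identifies $\mathrm{JSD}^\alpha_a$ as a single $f$-divergence in the orientation $\int q\,\phi(p/q)$. The generator is $\phi_\alpha(t)=(1-\alpha)t\log t-u\log u$ with $u=(1-\alpha)t+\alpha$. The algebraic identity, the orientation, and $\phi_\alpha''(t)=\alpha(1-\alpha)/(tu)$ all check out, and you then read off the coefficient from Proposition~\ref{proposition:Local_approx_of_f-div}.

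The paper does essentially what you list as your fallback. It expands each KL term separately, using the $[1/1]$ Pad\'e approximant $\log(1+u)\approx 2u/(2+u)$ (which agrees with the second-order Taylor expansion) followed by a Taylor expansion of $(1+u)^{-1}$. This gives $D_{\mathrm{KL}}(P\|M_\alpha)\approx\frac{\alpha^2}{2}\int\varepsilon^2/p$ and $D_{\mathrm{KL}}(Q\|M_\alpha)\approx\frac{(1-\alpha)^2}{2}\int\varepsilon^2/p$, which are then combined with weights $1-\alpha$ and $\alpha$.

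Your route has three advantages. It is shorter. It certifies via the convex generator that $\mathrm{JSD}^\alpha_a$ is a genuine $f$-divergence. And it immediately transfers the coefficient $\alpha(1-\alpha)/2$ to the Fisher and pullback forms of Corollaries~\ref{corollary:fisher-form} and~\ref{corollary:pullback-form}.

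The paper's direct expansion has its own advantages. It is self-contained, since it does not rely on Proposition~\ref{proposition:Local_approx_of_f-div}, which the paper only cites. It also shows how the total curvature splits between the two KL terms, as $\alpha^2(1-\alpha)$ versus $\alpha(1-\alpha)^2$.
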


The proof is provided in Appendix~\ref{app:curv_alpha_JSD}. It can be obtained by specializing the local quadratic expansion of Proposition~\ref{proposition:Local_approx_of_f-div} to the asymmetric $\alpha$-skew JSD family; for completeness, we provide a direct derivation based on Padé approximations and Taylor expansions.

Proposition~\ref{proposition:curv_asymm_jsd} shows that, within the asymmetric $\alpha$-skew JSD family, varying $\alpha$ does not merely change the asymmetry of the divergence: it also rescales the strength of the induced local quadratic penalty through the factor $\alpha(1-\alpha)$. Since this factor is maximal at $\alpha = \tfrac12$, the symmetric JSD yields the greatest local curvature in this family.

\section{Local Connection to Sharpness-Aware Minimization}
\label{section:local_connection_to_sam}
The previous analysis shows that $f$-divergence-based regularization induces a local quadratic penalty in parameter space. We now use this characterization to establish a local second-order comparison with Sharpness-Aware Minimization (SAM). At a high level, both approaches promote robustness to parameter perturbations: divergence-based regularization enforces stability of the predictive distribution under small perturbations, whereas SAM explicitly favors parameter neighborhoods in which the training loss remains small.

Recall that SAM is defined through the min--max objective
\begin{align}
    \min_{\theta \in \Theta} \; \max_{\|\delta\|\leq \rho} \, L_{\mathcal{S}}(\theta+\delta),
\end{align}
where $\rho > 0$ controls the size of the adversarial neighborhood in parameter space. Equivalently, defining the SAM loss by
\begin{align}
    L_{\mathcal{S}}^{\mathrm{SAM}}(\theta)
    \triangleq
    \max_{\|\delta\|\leq \rho} \; L_{\mathcal{S}}(\theta+\delta),
\end{align}
the optimization problem becomes
\begin{align}
    \min_{\theta \in \Theta} \; L_{\mathcal{S}}^{\mathrm{SAM}}(\theta).
\end{align}

The inner maximization in SAM can be analyzed through a second-order Taylor expansion of the loss around $\theta$:
\begin{align}
    L_{\mathcal{S}}(\theta+\delta)
    =
    L_{\mathcal{S}}(\theta)
    + \nabla_\theta L_{\mathcal{S}}(\theta)^\top \delta
    + \frac{1}{2}\delta^\top H_{\mathcal{S}}(\theta)\delta
    + o(\|\delta\|^2),
\end{align}
where $H_{\mathcal{S}}(\theta)$ denotes the Hessian of the training loss. Substituting this expansion into the SAM objective yields
\begin{align}
    L_{\mathcal{S}}^{\mathrm{SAM}}(\theta)
    =
    \max_{\|\delta\|\leq \rho}
    \left[
        L_{\mathcal{S}}(\theta)
        + \nabla_\theta L_{\mathcal{S}}(\theta)^\top \delta
        + \frac{1}{2}\delta^\top H_{\mathcal{S}}(\theta)\delta
    \right]
    + o(\rho^2).
\end{align}

Near a stationary point, where $\nabla_\theta L_{\mathcal{S}}(\theta)\approx 0$, the linear term becomes negligible, and the inner maximization is locally dominated by the curvature term alone:
\begin{align}
    L_{\mathcal{S}}^{\mathrm{SAM}}(\theta)
    \approx
    L_{\mathcal{S}}(\theta)
    +
    \max_{\|\delta\|\leq \rho}
    \frac{1}{2}\delta^\top H_{\mathcal{S}}(\theta)\delta.
\end{align}

When the dominant curvature is nonnegative, as is the case in particular near a local minimum, the maximization of this quadratic form over the Euclidean ball can be characterized through the Rayleigh quotient. Indeed, for a symmetric matrix, the Rayleigh-Ritz theorem gives the largest eigenvalue as the maximum of the associated quadratic form over the unit sphere~\citep{horn2012matrix}. Applying this to $H_{\mathcal{S}}(\theta)$ yields
\begin{align}
    \max_{\|\delta\|\leq \rho}
    \frac{1}{2}\delta^\top H_{\mathcal{S}}(\theta)\delta
    =
    \frac{\rho^2}{2}\,\lambda_{\max}\!\bigl(H_{\mathcal{S}}(\theta)\bigr),
\end{align}
where the derivation is provided in Appendix~\ref{app:rayleigh_ball}. Consequently, in the regime where the dominant curvature is nonnegative and the iterate is near stationarity,
\begin{align}
\label{eq:sam_lambda}
    L_{\mathcal{S}}^{\mathrm{SAM}}(\theta)
    =
    L_{\mathcal{S}}(\theta)
    +
    \frac{\rho^2}{2}\,\lambda_{\max}\!\bigl(H_{\mathcal{S}}(\theta)\bigr)
    +
    o(\rho^2).
\end{align}

Thus, near stationarity and under nonnegative dominant curvature, SAM admits a local second-order interpretation as a spectral curvature penalty, consistent with prior analyses~\citep{agarwala2023sam} showing that it suppresses directions associated with large Hessian eigenvalues during training.

To compare the SAM objective in Eq. (\ref{eq:sam_lambda}) with the divergence-based objective in Eq. (\ref{eq:div_obj}), we consider the parameter perturbation case where $P = f_\theta(x)$ and $ Q = f_{\theta+\delta}(x).$
By Corollary~\ref{corollary:fisher-form}, the divergence-based objective admits the local expansion
\begin{align}
    L_{\mathcal S}^{\mathrm{Div}}(\theta)
    = L_{\mathcal{S}}(\theta) + \lambda
    \frac{\phi''(1)}{2}\,\delta^\top F_{\mathcal{S}}(\theta)\,\delta
    + o(\|\delta\|^2),
\end{align}
where $F_S(\theta) = \frac{1}{n}\sum_{i=1}^n F_{x_i}(\theta)$ denotes the Fisher information matrix associated with the predictive distributions over the training set $\mathcal{S}$. Hence, while SAM induces a spectral curvature penalty governed by the dominant Hessian eigenvalue, divergence-based regularization induces a Fisher-weighted quadratic penalty on parameter perturbations.

The comparison established above suggests that the key remaining question is whether the Fisher geometry induced by divergence-based regularization can be related to the Hessian geometry underlying SAM. In general, these two objects are distinct. However, as discussed by~\citet{martens2020new}, for many important probabilistic losses the Fisher matrix coincides with, or provides a natural approximation to, the generalized Gauss-Newton (GGN) matrix, which in turn captures the positive-semidefinite component of the Hessian.

In particular, for negative log-likelihood (NLL) objectives with exponential-family output distributions, the Fisher and GGN matrices coincide~\citep{martens2020new}. This includes the common case of softmax outputs trained with cross-entropy loss. In such settings, the curvature geometry induced by divergence-based regularization through the Fisher matrix is therefore closely aligned with the curvature structure underlying second-order interpretations of SAM.

Consequently, although divergence-based regularization and SAM are not identical objectives, they become locally comparable through a common curvature geometry: divergence-based regularization penalizes parameter perturbations through a Fisher-weighted quadratic form, while SAM penalizes sharp directions of the loss landscape through the Hessian. When the Hessian is well approximated by its Gauss-Newton component, these two perspectives become locally aligned.

While the direct connection to SAM arises under the parameter perturbation setting, the input-space analysis highlights a broader advantage of divergence-based regularization. Indeed, the same framework naturally accommodates perturbations not only in parameter space, but also in input space, or even in combined input-parameter space. In this sense, divergence-based regularization operates over a more general perturbation regime than standard SAM, which is defined solely through adversarial perturbations of the parameters. Corollary~\ref{corollary:pullback-form} shows that, in the input-space setting, the resulting objective penalizes directions along which the predictive distribution varies sharply under small input perturbations, thereby connecting divergence-based regularization to local input robustness as well as parameter-space sharpness control.

\section{Empirical Validation}
\label{section:experiments}

The theoretical analysis developed in Sections~\ref{section:general_framework} and~\ref{section:local_connection_to_sam} provides a general local interpretation of divergence-based regularization: when the predictive distributions before and after perturbation are close, the regularization term reduces to a second-order penalty whose geometry depends on the chosen divergence. To test whether this local curvature has an observable empirical effect, we require a family of divergences in which the strength of the induced curvature can be varied in a controlled manner without otherwise changing the overall structure of the regularizer. The asymmetric $\alpha$-skew JSD family is particularly well suited for this purpose. As shown in Proposition~\ref{proposition:curv_asymm_jsd}, its local curvature coefficient scales as $\alpha(1-\alpha)$, yielding a one-parameter family in which the curvature varies smoothly and reaches its maximum at the symmetric point $\alpha=\tfrac12$. This makes it a natural testbed for a controlled empirical validation of the theoretical predictions.

\subsection{Experimental setup}
\paragraph{Datasets.}
We employed four benchmark datasets in our experiments: CIFAR-10, Fashion-MNIST, EMNIST (Balanced), and Oxford-IIIT Pet. These datasets were chosen for their widespread adoption as standard benchmarks and for the diversity of their characteristics, including image resolution, number of classes, dataset size, and color modality. This diversity allows us to assess whether the empirical trends associated with the regularization are stable across a range of learning settings.

\paragraph{Perturbation mechanism.}
The experiments are conducted in the input-perturbation setting. We implement the transformation $T$ as random masking of a fixed proportion of input features, and compare the predictive distributions on the original input $x$ and its perturbed version $T(x)$. To approximately probe the local perturbative regime motivated by the theoretical analysis, we use a small masking intensity and set the proportion of masked features to $2\%$. This yields a mild input perturbation that is sufficient to probe local sensitivity while preserving the overall content of the example. A detailed description of the transformation mechanism is provided in Appendix~\ref{app:transformation}.

\paragraph{Model and training.}
We employed the pretrained EfficientNet-B2 model~\citep{tan2019efficientnet} and fine-tuned it on each dataset for two epochs, except for the Oxford-IIIT Pet dataset, which required thirteen epochs to ensure convergence due to its smaller size and higher class granularity. Training was performed using the AdamW optimizer with a learning rate of $10^{-4}$, a batch size of $32$, and nine independent random seeds per dataset.

\paragraph{Metrics.}
Model performance is evaluated using two complementary metrics: accuracy and NLL. While accuracy measures predictive correctness, NLL assesses the quality of the model's probabilistic predictions.

\paragraph{Controlled variation of the curvature coefficient.}

To isolate the effect of the theoretically predicted curvature variation, we vary only the asymmetry parameter $\alpha$ within the asymmetric $\alpha$-skew JSD family, considering values $\alpha \in \{0.1,0.2,\dots,0.9\}$, while keeping all other components of the training pipeline fixed. Since Proposition~\ref{proposition:curv_asymm_jsd} shows that the local second-order coefficient is given by $\alpha(1-\alpha)$, this design yields a controlled experimental setting in which changes in performance can be directly compared with the predicted variation in curvature.

\subsection{Empirical performance across the asymmetric $\alpha$-skew JSD family}

\begin{figure*}[h!]
    \centering
    \includegraphics[width=\linewidth]{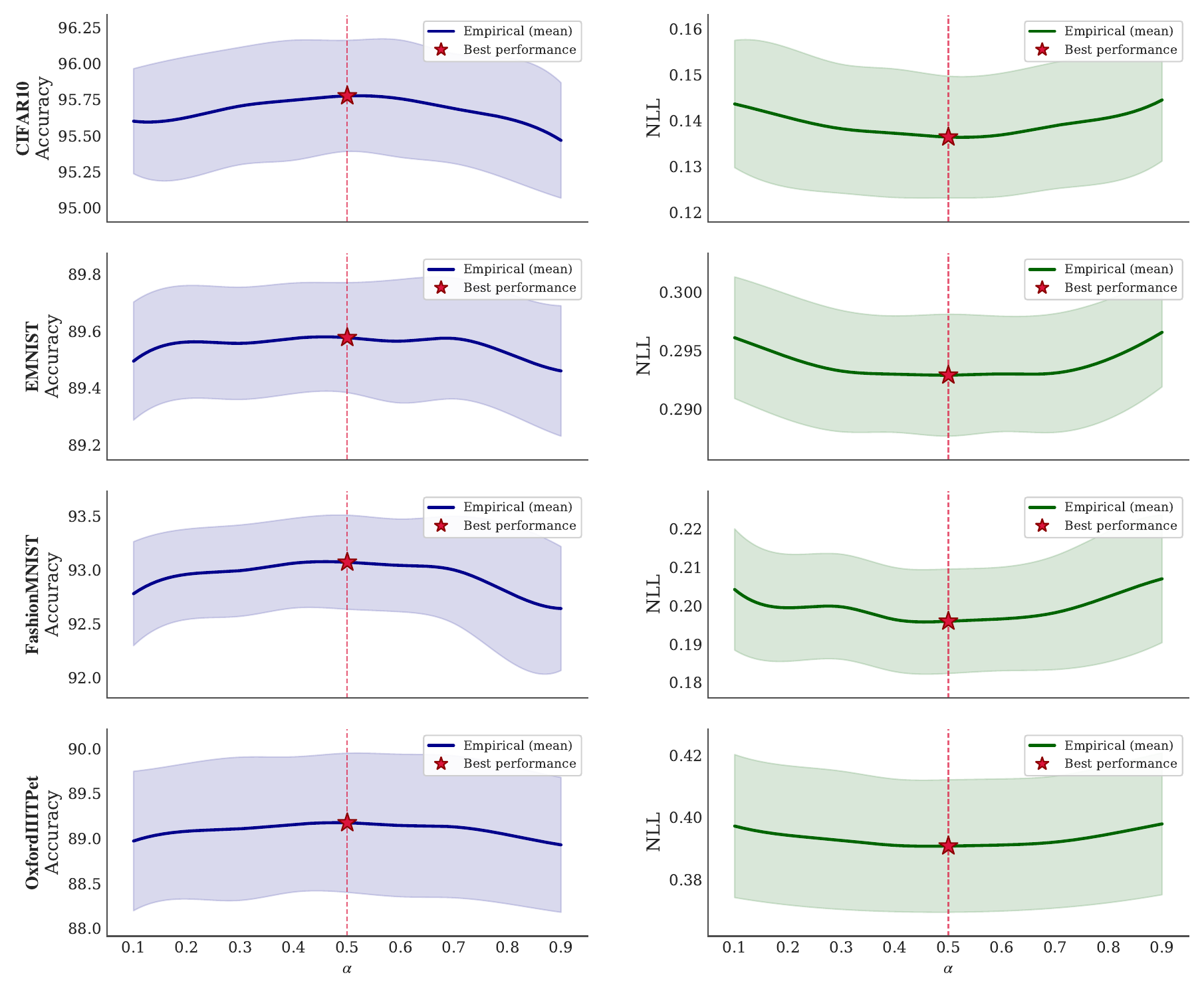}
    \caption{Model performance as a function of the asymmetry parameter $\alpha$ across the evaluated datasets.}
    \label{fig:results}
\end{figure*}

Figure~\ref{fig:results} summarizes the impact of the asymmetric $\alpha$-skew JSD regularization on model performance, in terms of accuracy and NLL, across the evaluated datasets. Since the theory predicts that the induced local curvature scales as $\alpha(1-\alpha)$, attaining its maximum at $\alpha=\tfrac12$, this experiment provides a controlled test of whether a stronger local curvature-penalization coefficient is associated with improved empirical behavior.

Across all datasets, performance is consistently strongest near the symmetric regime, where the theoretical curvature coefficient is largest. In particular, accuracy tends to increase and NLL tends to decrease as $\alpha$ approaches $\tfrac12$, indicating enhanced predictive performance together with improved probabilistic fit. The best overall results are achieved in the symmetric case $\alpha=\tfrac12$, which is precisely the point at which the theoretical local curvature reaches its maximum.

To further characterize the shape of the empirical response as a function of $\alpha$, we also fit symmetric power-law models around $\alpha=\tfrac12$; the corresponding analysis is reported in Appendix~\ref{app:powerlaw_fits}.

Overall, the empirical trends are consistent with the theoretical analysis. Within the JSD family, varying $\alpha$ changes the local second-order coefficient without otherwise modifying the overall structure of the regularizer. The fact that performance is strongest near $\alpha=\tfrac12$ therefore supports the interpretation that the strength of the induced local curvature plays a meaningful role in the effectiveness of divergence-based regularization within this family. 

\subsection{Loss Landscape Comparison}
\begin{figure}[h!]
     \centering
     \includegraphics[width=\linewidth]{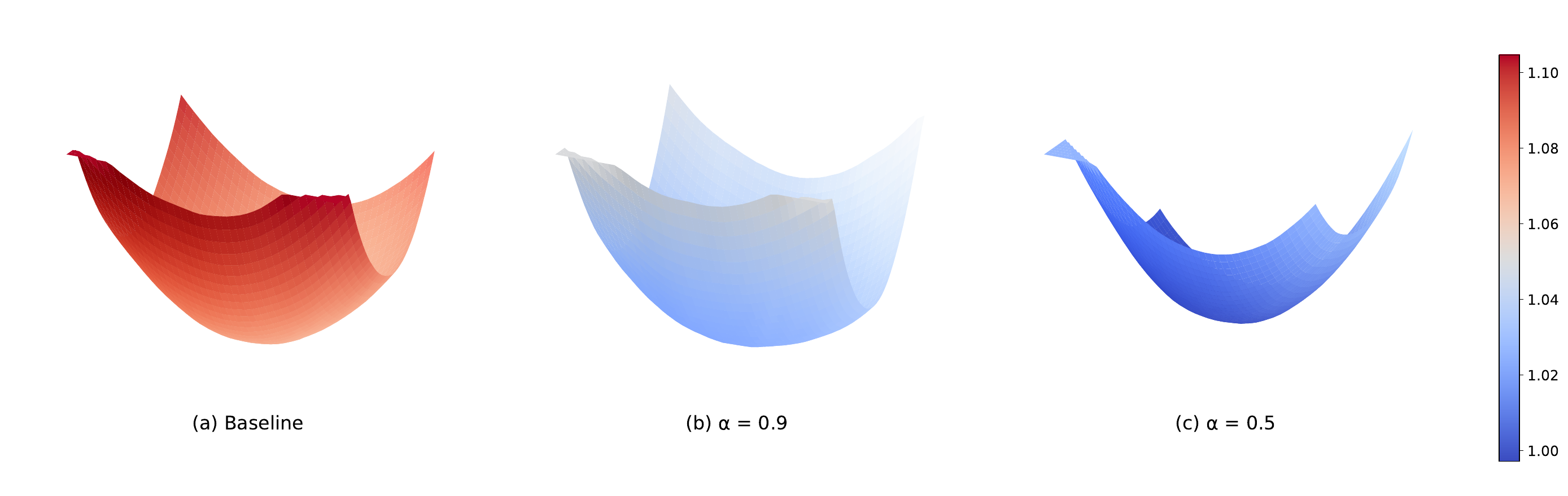}
     \caption{Three-dimensional renderings of the loss landscapes of models trained without divergence regularization (baseline), with asymmetric $\alpha$-skew JSD regularization with $\alpha = 0.9$, and with $\alpha = 0.5$, evaluated on a common two-dimensional perturbation plane around the final parameters. Colors denote loss values on a shared scale.}
    \label{fig:landscape-3d}
 \end{figure}

To perform the visualizations in Figure~\ref{fig:landscape-3d}, we train a ResNet-18 on CIFAR-10 for $80$ epochs and then visualize the loss landscape using the procedure described in~\cite{li2018visualizing}. The model is trained without pretraining, with the final classification layer adapted to the $10$ classes. Optimization is performed using SGD with momentum of $0.9$, an initial learning rate of $0.1$, weight decay of $5 \times 10^{-4}$, and a batch size of $256$. A multi-step learning rate schedule decreases the learning rate by a factor of $0.1$ at epochs $30$ and $45$. The baseline is trained with the cross-entropy loss alone. We compare it to models trained with the same setup but augmented with the asymmetric $\alpha$-skew JSD regularizer. 

Figure~\ref{fig:landscape-3d} illustrates that incorporating the asymmetric $\alpha$-skew JSD regularization drives the optimization toward flatter local minima. Relative to the baseline, the model trained with $\alpha=0.9$ exhibits a flatter loss landscape. Furthermore, the comparison between $\alpha=0.9$ and $\alpha=0.5$ indicates that the choice of $\alpha$ also affects the geometry of the minimum, with the more symmetric setting $\alpha=0.5$, corresponding to the strongest local curvature and thus the strongest penalization, producing the flattest basin. For completeness, the two-dimensional visualization and analysis are provided in Appendix~\ref{app:landscape-2d}.

\begin{table}[t]
\centering
\caption{Average loss-landscape statistics across three seeds for each training method.}
\label{tab:landscape_stats}
\begin{tabular}{lccc}
\toprule
Method & Avg. loss & Avg. $\mathrm{tr}(H_{2D})$ & Avg. $\lambda_{\max}(H_{2D})$ \\
\midrule
Baseline & $1.012 \pm 0.056$ & $6.426 \pm 3.591$ & $3.928 \pm 2.384$ \\
$\alpha = 0.9$ & $1.019 \pm 0.017$ & $4.745 \pm 3.260$ & $2.945 \pm 1.852$ \\
$\alpha = 0.5$ & $\bm{1.003 \pm 0.033}$ & $\bm{3.508 \pm 2.689}$ & $\bm{2.700 \pm 1.512}$ \\
$\alpha = 0.1$ & $1.027 \pm 0.013$ & $4.622 \pm 1.532$ & $2.959 \pm 1.242$ \\
\bottomrule
\end{tabular}
\end{table}

Table~\ref{tab:landscape_stats} quantitatively supports the visual trends in Figure~\ref{fig:landscape-3d}. Averaged across three random seeds, divergence-based regularization reduces both the trace and the maximum eigenvalue of the two-dimensional Hessian approximation relative to the baseline, indicating a flatter local loss landscape. Among the evaluated settings, $\alpha=0.5$ yields the lowest average loss together with the smallest curvature measures, which is consistent with the qualitative observation that the symmetric regime produces the flattest basin. Although the variability across seeds is non-negligible, the overall trend remains consistent: stronger curvature penalization is associated with reduced local sharpness.

\section{Limitations}
\label{section:limitations}
The fact that all $f$-divergences reduce, up to second order, to the same quadratic form scaled by $\phi''(1)$ shows that divergence-based regularizers share a common local geometric interpretation, which is naturally linked to sharpness-aware minimization. At the same time, this universality reveals a limitation of second-order analysis: since all $f$-divergences coincide locally up to a scaling factor, such an analysis alone cannot determine which divergence family should be preferred in practice. This suggests that meaningful differences between divergence families must arise beyond the quadratic regime, for instance through higher-order terms in the local expansion or through structural properties such as symmetry and boundedness. Developing such criteria for divergence selection is therefore a natural direction for future work.

\section{Conclusion}
In this paper, we studied $f$-divergence-based regularization through the lens of local second-order geometry and established a formal connection with sharpness-aware minimization. Leveraging classical results on the local quadratic behavior of $f$-divergences, we showed that divergence-based regularization induces a local curvature geometry that becomes comparable to second-order interpretations of SAM in parameter space under standard probabilistic-loss assumptions. Beyond the parameter perturbation setting, divergence-based regularization naturally extends to input perturbations, thereby defining a more general perturbation regime than SAM. 

The asymmetric $\alpha$-skew JSD family provides a controlled example of how variations in local curvature within a fixed subfamily of $f$-divergences affect model performance. We showed that its local second-order coefficient scales as $\alpha(1-\alpha)$ and is maximized at the symmetric point $\alpha=\tfrac12$. In turn, stronger local coefficients within the $\alpha$-skew JSD family were empirically associated with improved performance and flatter projected loss-landscape profiles in our input-perturbation experiments.

\section*{Acknowledgments}
This work was supported by the IMT “Futur, Ruptures \& Impacts” programme, by the European Research Council (ERC) under the European Union’s Horizon 2020 research and innovation programme (Grant Agreement No. 101003431, SONATA), and by the Smart Networks and Services Joint Undertaking (SNS JU) under the European Union’s Horizon Europe research and innovation programme (Grant Agreement No. 101192080, 6G-LEADER).

\bibliography{collas2026_conference}
\bibliographystyle{collas2026_conference}

\newpage
\appendix
\section{Proofs}
\subsection{Fisher form}
\label{app:fisher-form}
\begin{proof}[Proof of Corollary~\ref{corollary:fisher-form}]
Let $P=f_\theta(x)$ and $Q=f_{\theta+\delta}(x)$, and denote by $p$ and $q$ their respective densities with respect to a common reference measure on $\mathcal Y$. Since $\theta \mapsto f_\theta(x)$ is differentiable, we have the first-order expansion
\[
q(y)-p(y)=d_\theta p(y)[\delta]+o(\|\delta\|).
\]
Applying Proposition~\ref{proposition:Local_approx_of_f-div} with $\varepsilon(y)=q(y)-p(y)$ gives
\[
D_\phi(P\|Q)
=
\frac{\phi''(1)}{2}
\int
\frac{(d_\theta p(y)[\delta])^2}{p(y)}\,dy
+o(\|\delta\|^2).
\]
Expressing the differential in coordinates, we obtain
\[
d_\theta p(y)[\delta]
=
\nabla_\theta p(y)^\top \delta.
\]
Then
\[
\int
\frac{(d_\theta p(y)[\delta])^2}{p(y)}\,dy
=
\int
\frac{\delta^\top \nabla_\theta p(y)\nabla_\theta p(y)^\top \delta}{p(y)}\,dy
=
\delta^\top
\left(
\int
\frac{\nabla_\theta p(y)\nabla_\theta p(y)^\top}{p(y)}\,dy
\right)
\delta.
\]
By definition, the matrix in parentheses is the Fisher information matrix $F_x(\theta)$. Therefore,
\[
D_\phi(P\|Q)
=
\frac{\phi''(1)}{2}\,
\delta^\top F_x(\theta)\,\delta
+o(\|\delta\|^2).
\]
This completes the proof.
\end{proof}

\subsection{Pullback form}
\label{app:pullback-form}

\begin{proof}[Proof of Corollary~\ref{corollary:pullback-form}]
Let $P=f_\theta(x)$ and $Q=f_\theta(T(x))$, where $T(x)=x+\Delta x$, and denote by $p$ and $q$ their respective densities with respect to a common reference measure on $\mathcal Y$. Since $x \mapsto f_\theta(x)$ is differentiable, we have
\[
q(y)-p(y)=d_x p(y)[\Delta x]+o(\|\Delta x\|).
\]
Applying Proposition~\ref{proposition:Local_approx_of_f-div} with $\varepsilon(y)=q(y)-p(y)$ yields
\[
D_\phi(P\|Q)
=
\frac{\phi''(1)}{2}
\int
\frac{(d_x p(y)[\Delta x])^2}{p(y)}\,dy
+o(\|\Delta x\|^2).
\]
Using the representation
\[
d_x p(y)[\Delta x]=\nabla_x p(y)^\top \Delta x,
\]
we obtain
\[
\int
\frac{(d_x p(y)[\Delta x])^2}{p(y)}\,dy
=
\int
\frac{\Delta x^\top \nabla_x p(y)\nabla_x p(y)^\top \Delta x}{p(y)}\,dy
=
\Delta x^\top
\left(
\int
\frac{\nabla_x p(y)\nabla_x p(y)^\top}{p(y)}\,dy
\right)
\Delta x.
\]
Defining
\[
\Gamma_x(\theta)
= 
\int
\frac{\nabla_x p(y)\nabla_x p(y)^\top}{p(y)}\,dy,
\]
we conclude that
\[
D_\phi(P\|Q)
=
\frac{\phi''(1)}{2}\,
\Delta x^\top \Gamma_x(\theta)\,\Delta x
+o(\|\Delta x\|^2).
\]
This completes the proof.
\end{proof}

\subsection{Curvature of the Asymmetric $\alpha$-skew JSD}
\label{app:curv_alpha_JSD}
\begin{proof}[Proof of Proposition~\ref{proposition:curv_asymm_jsd}]
For completeness, we restate Proposition~\ref{proposition:curv_asymm_jsd}.  

Let $P$ and $Q$ be two distributions with densities $p$ and $q$, and assume that
$$
q(y) = p(y) + \varepsilon(y),
$$
where $\varepsilon$ is sufficiently small and satisfies
$$
\int \varepsilon(y)\,dy = 0.
$$
Then, the asymmetric $\alpha$-skew JSD admits the second-order expansion
$$
    \mathrm{JSD}_a^\alpha(P \,\|\, Q)
    =
    \frac{\alpha(1-\alpha)}{2}
    \int \frac{\varepsilon(y)^2}{p(y)}\,dy
    + o(\|\varepsilon\|^2).
$$

To prove the result, we employ the $[1/1]$ Padé approximant of $\log(1+u)$: 
\begin{align}
   \label{eq:pade}
   \log(1+u) \approx \frac{2u}{2+u},  
\end{align}
which matches the Taylor expansion up to second order while providing a more stable approximation.

Let $M_{\alpha} = (1-\alpha) P + \alpha Q = P + \alpha \varepsilon$. We decompose the proof into three steps
\begin{itemize}
    \item The expansion of $D_{\mathrm{KL}}(P||M_\alpha)$.
    \item The expansion of $D_{\mathrm{KL}}(Q||M_\alpha)$.
    \item The expansion of $(1-\alpha)\, D_{\mathrm{KL}}\!\bigl(P \,\big\|\, M_\alpha\bigr)
    + \alpha\, D_{\mathrm{KL}}\!\bigl(Q \,\big\|\, M_\alpha\bigr)$ obtained by combining the previous two expansions.
\end{itemize}

\paragraph{Expansion of $D_{\mathrm{KL}}(P||M_\alpha)$.}
$$
D_{\mathrm{KL}}(P||M_\alpha) = 
 \int p(y)\log\frac{p(y)}{p(y)+\alpha \varepsilon(y)}dy
= -\int p(y)\log\left(1 + \alpha\frac{\varepsilon(y)}{p(y)}\right)dy.
$$
Using the Padé approximation (\eqref{eq:pade}), we obtain
$$\log\left(1 + \alpha\frac{\varepsilon}{p}\right)
\approx
\frac{2 \alpha \frac{\varepsilon(y)}{p(y)}}{2 + \alpha \frac{\varepsilon(y)}{p(y)}}.
$$
Thus:
$$
D_{\mathrm{KL}}(P||M_\alpha)
\approx
-\int p\left(
\frac{2 \alpha \frac{\varepsilon}{p}}{2 + \alpha \frac{\varepsilon}{p}}
\right)  
= - \int \alpha \varepsilon \left(1+ \alpha \frac{\varepsilon}{2p}\right)^{-1}. 
$$
Using the Taylor expansion of $(1+u)^{-1}$ when $u \to 0$, we obtain: 
$$    \left(1+ \alpha \frac{\varepsilon}{2p}\right)^{-1} = 1 - \frac{\alpha \varepsilon}{2p} + \alpha^2 \frac{\varepsilon^2}{4p^2} + o(\varepsilon^2). 
$$
Thus, up to the second order: 
$$    D_{\mathrm{KL}}(P||M_\alpha) \approx -\alpha \int \varepsilon(y) dy + \frac{\alpha^2}{2} \int \frac{\varepsilon^2(y)}{p(y)} dy. 
$$

Using $\int \varepsilon = 0$, we obtain:
\begin{align}
\label{eq:kl_pm}
D_{\mathrm{KL}}(P||M_\alpha)
\approx
\frac{\alpha^2}{2}\int \frac{\varepsilon^2}{p}.
\end{align}

\paragraph{Expansion of $D_{\mathrm{KL}}(Q||M_\alpha)$.}
We write:
$$
D_{\mathrm{KL}}(Q||M_\alpha) = 
\int (p(y)+\varepsilon(y))\log\frac{p(y)+\varepsilon(y)}{p(y)+\alpha\varepsilon(y)}dy.
$$
The ratio can be expressed as:
$$
\frac{p+\varepsilon}{p+\alpha\varepsilon} = 1 + 
\frac{(1 - \alpha) \varepsilon}{p + \alpha \varepsilon}.
$$
Let $$v = \frac{(1 - \alpha) \varepsilon}{p + \alpha \varepsilon}.
$$ 
We have: 
$$
\frac{1}{p+\alpha \varepsilon} = \frac{1}{p} (1 + \alpha \frac{\varepsilon}{p})^{-1}
$$
Using the Taylor expansion of $(1+u)^{-1}$ as $u \to 0$: 
$$
(1 + \alpha \frac{\varepsilon}{p})^{-1} \approx 1 - \alpha \frac{\varepsilon}{p} + \alpha^2 \frac{\varepsilon^2}{p^2}
$$
Then: 
\begin{align}
\label{eq:v}
  v = (1-\alpha)\frac{\varepsilon}{p} - \alpha (1 -\alpha) \frac{\varepsilon^2}{p^2} + O(\varepsilon^3).   
\end{align}
Using the Padé approximation (\eqref{eq:pade}) then the Taylor expansion of $(1+u)^{-1}$, we obtain:
\begin{align}
\label{eq:log_1+v}
    \log(1+v) =  v(1 - \frac{v}{2} + o(v^2)).
\end{align}
Substituting the expression for $v$ from \eqref{eq:v} into \eqref{eq:log_1+v} and expanding, we obtain: 
$$
\log(1+v) = (1-\alpha)\frac{\varepsilon}{p} - \frac{1}{2}(1-\alpha)(1+\alpha)\frac{\varepsilon^2}{p^2} + o(\varepsilon^2).
$$
Multiplying by $(p+\varepsilon)$ and keeping terms up to second order:
\begin{align*}
    (p+\varepsilon) \log \frac{p+\varepsilon}{p+ \alpha \varepsilon} &= (p+\varepsilon) \left((1-\alpha)\frac{\varepsilon}{p} - \frac{1}{2}(1-\alpha)(1+\alpha)\frac{\varepsilon^2}{p^2} + o(\varepsilon^2) \right) 
\\ &= (1 - \alpha)\varepsilon - \frac{1}{2} (1-\alpha)(1+\alpha) \frac{\varepsilon^2}{p} + (1-\alpha)\frac{\varepsilon^2}{p} + o(\varepsilon^2)
\\ &\approx (1-\alpha)\varepsilon + \frac{1}{2} (1-\alpha)^2 \frac{\varepsilon^2}{p}.
\end{align*}
Integrating and using $\int \varepsilon = 0$, we obtain:
\begin{align}
\label{eq:kl_qm}
D_{\mathrm{KL}}(Q||M_\alpha)
\approx
\frac{(1-\alpha)^2}{2}\int \frac{\varepsilon^2}{p}.
\end{align}

\paragraph{Combination.}
The asymmetric $\alpha$-skew JSD is given by
$$
JSD_a^{\alpha}(P||Q) =  (1-\alpha)D_{\mathrm{KL}}(P||M_\alpha)
+
\alpha D_{\mathrm{KL}}(Q||M_\alpha).
$$

Substituting the expansions obtained in \eqref{eq:kl_pm} and \eqref{eq:kl_qm}, we obtain
$$
JSD_a^{\alpha}(P||Q) 
\approx
(1-\alpha)\frac{\alpha^2}{2} \int \frac{\varepsilon^2}{p}
+
\alpha\frac{(1-\alpha)^2}{2}\int \frac{\varepsilon^2}{p}. 
$$
Factoring the common terms yields
$$
JSD_a^{\alpha}(P||Q) \approx 
\frac{1}{2} \alpha (1 - \alpha) \int \frac{\varepsilon^2}{p}.
$$
This completes the proof.
\end{proof}

\subsection{Quadratic maximization over the Euclidean ball}
\label{app:rayleigh_ball}
\begin{proof}
We start by recalling how the maximization of a quadratic form over a Euclidean ball follows from the Rayleigh quotient characterization on the unit sphere. Let $A$ be a symmetric matrix, and consider
$$
\max_{\|x\|_2 \leq \rho} x^\top A x.
$$
Any vector $x$ in the ball can be written as
$$
x = r u,
\qquad 0 \leq r \leq \rho,
\qquad \|u\|_2 = 1.
$$
Substituting this decomposition into the quadratic form gives
$$
x^\top A x = r^2\, u^\top A u.
$$
For a fixed direction $u$, the dependence on $r$ is entirely through the factor $r^2$. Therefore, if $u^\top A u > 0$, the maximum along that ray is attained at the boundary $r=\rho$, whereas if $u^\top A u < 0$, the maximum is attained at $r=0$. It follows that
$$
\max_{\|x\|_2 \leq \rho} x^\top A x
=
\max\!\left(0,\ \rho^2 \max_{\|u\|_2 = 1} u^\top A u\right).
$$
Since $A$ is symmetric, the Rayleigh quotient theorem yields
$$
\max_{\|u\|_2 = 1} u^\top A u = \lambda_{\max}(A).
$$
Hence,
$$
\max_{\|x\|_2 \leq \rho} x^\top A x
=
\rho^2 \max\!\bigl(\lambda_{\max}(A),0\bigr).
$$
Equivalently,
$$
\max_{\|x\|_2 \leq \rho} \frac{1}{2}x^\top A x
=
\frac{\rho^2}{2}\max\!\bigl(\lambda_{\max}(A),0\bigr).
$$
In particular, when $\lambda_{\max}(A)\geq 0$, the maximum is attained on the boundary of the ball along an eigenvector associated with the largest eigenvalue, and one recovers
$$
\max_{\|x\|_2 \leq \rho} \frac{1}{2}x^\top A x
=
\frac{\rho^2}{2}\lambda_{\max}(A).
$$
This completes the proof.
\end{proof}

\newpage

\section{Illustrative behavior of the asymmetric $\alpha$-skew JSD family on univariate Gaussians}
\label{app:alphaJSD}

Figure~\ref{fig:lambda_full} provides an intuition for the interpolation behavior by illustrating how the mixture distribution $M_\alpha$ and the corresponding divergence $\mathrm{JSD}^\alpha_a(P \,\|\, Q)$ evolve with $\alpha$ for two Gaussian distributions $P = \mathcal{N}(-1,1)$ and $Q = \mathcal{N}(1,1)$. Figure~\ref{fig:lambda_distributions} shows how the mixture $M_\alpha = (1-\alpha)P + \alpha Q$ transitions smoothly from $P$ to $Q$ as $\alpha$ increases, while Figure~\ref{fig:lambda_divergence_curve} shows that $\mathrm{JSD}^\alpha_a(P \,\|\, Q)$ approaches zero at the endpoints and reaches its maximum at $\alpha=\tfrac{1}{2}$, corresponding to the symmetric JSD.

\begin{figure}[h!]
    \centering
    \begin{subfigure}[t]{0.48\linewidth}
        \centering
        \includegraphics[width=0.8\linewidth]{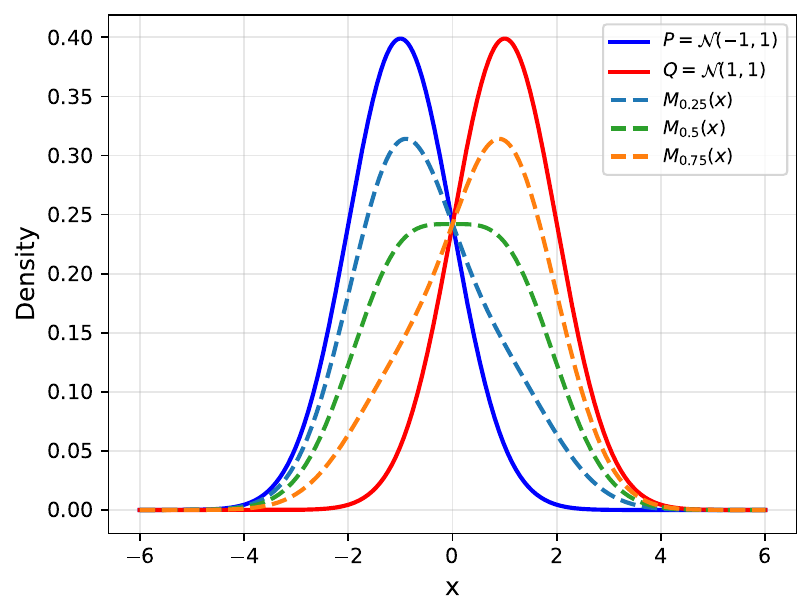}
        \caption{Distributions and their mixtures $M_\alpha = (1-\alpha)P + \alpha Q$.}
        \label{fig:lambda_distributions}
    \end{subfigure}
    \hfill
    \begin{subfigure}[t]{0.48\linewidth}
        \centering
        \includegraphics[width=0.8\linewidth]{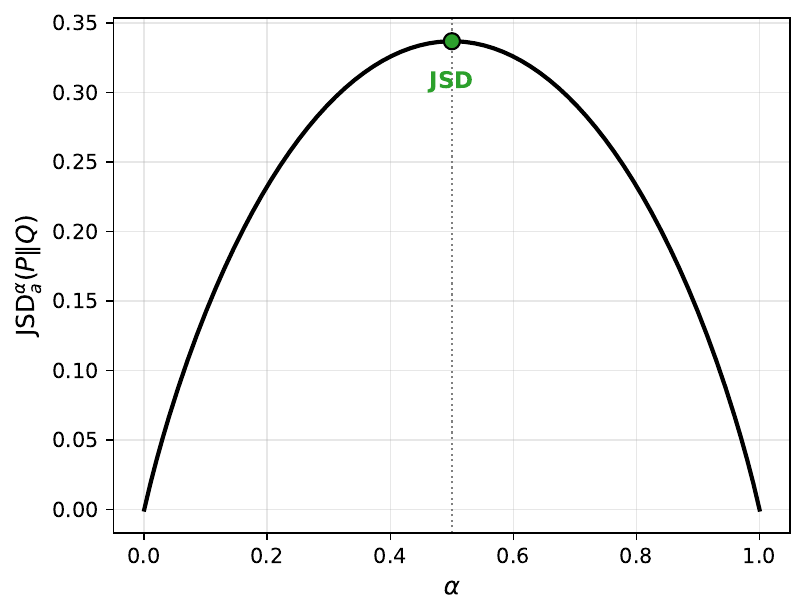}
        \caption{Corresponding divergence values $\mathrm{JSD}^\alpha_a(P \,\|\, Q)$ as a function of $\alpha$.}
        \label{fig:lambda_divergence_curve}
    \end{subfigure}
    \caption{Illustration of the asymmetric $\alpha$-skew Jensen--Shannon divergence family on a pair of univariate Gaussian distributions $P = \mathcal{N}(-1,1)$ and $Q = \mathcal{N}(1,1)$ across different $\alpha$ values.}
    \label{fig:lambda_full}
\end{figure}

\section{Detailed description of the transformation used}
\label{app:transformation}

\begin{figure}[h!]
    \centering
    \includegraphics[width=0.65\linewidth]{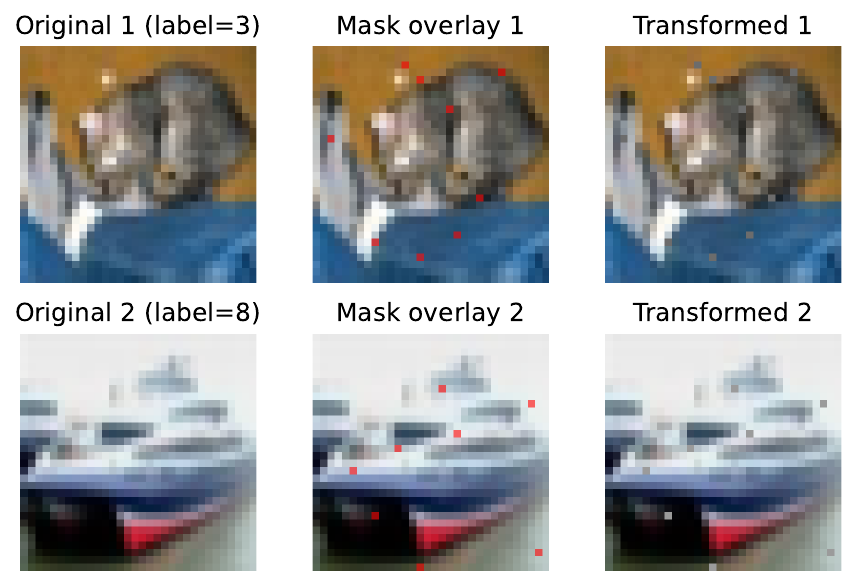}
    \caption{Two examples of input transformation.}
    \label{fig:transformation}
\end{figure}

For each input image, we generate a transformed version by randomly masking a small subset of pixels. Specifically, an independent spatial map is sampled uniformly in $[-1,1]$ at each pixel location. This map is defined directly at pixel resolution, without any blockwise aggregation. We then take its absolute value and select the pixels whose values fall below the $2\%$ quantile. These selected pixels are replaced by a baseline equal to the mean intensity of the corresponding image. 

\newpage
\section{Additional empirical analyses}
\subsection{Symmetric power-law characterization of the empirical curves}
\label{app:powerlaw_fits}
To characterize the empirical shape of the performance curves, we analyze the mean metric values across different $\alpha$ by fitting a symmetric power-law function of the form
\begin{align*}
    f(\alpha) = a\,|\alpha - 0.5|^p + b,
\end{align*}
where $f(\alpha)$ denotes the empirical metric as a function of $\alpha$, and the exponent $p$ characterizes the sharpness of the variation around the symmetric point $\alpha = 0.5$. The parameters $a$ and $b$ act as scaling and offset terms, respectively, while the exponent $p$ captures the effective curvature of the empirical response. Smaller values of $p$ correspond to sharper variations near the center, whereas values close to $p \approx 2$ indicate an approximately quadratic dependence.

Figure~\ref{fig:shapes} illustrates the fitted symmetric power-law curves for each dataset and evaluation metric. For CIFAR-10, the curves exhibit pronounced concave and convex shapes ($p<2$), indicating sharper performance gains near the symmetric regime. EMNIST displays a milder variation around the center ($3<p<4$), suggesting a smoother trade-off across $\alpha$. Fashion-MNIST and Oxford-IIIT Pet exhibit nearly quadratic and symmetric trends ($p\approx2$), reinforcing that the symmetric divergence yields the most stable and well-calibrated behavior across diverse datasets.

\begin{figure*}[h!]
    \centering
    \includegraphics[width=\linewidth]{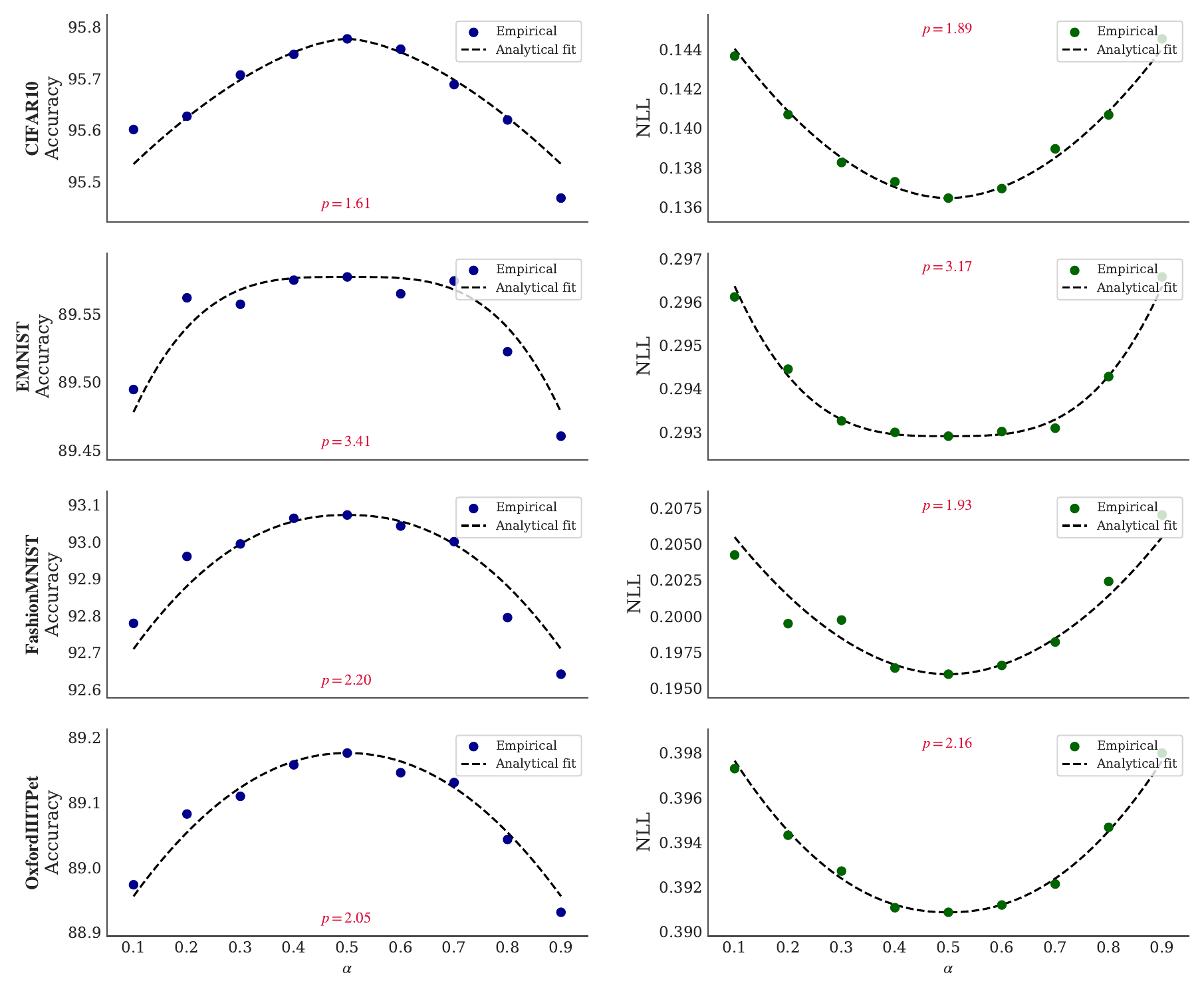}
    \caption{Analytical performance-curvature analysis using symmetric power-law fits.}
    \label{fig:shapes}
\end{figure*}

\newpage
\subsection{Two-dimensional loss landscape visualizations}
\label{app:landscape-2d}
 \begin{figure}[h!]
     \centering
     \includegraphics[width=\linewidth]{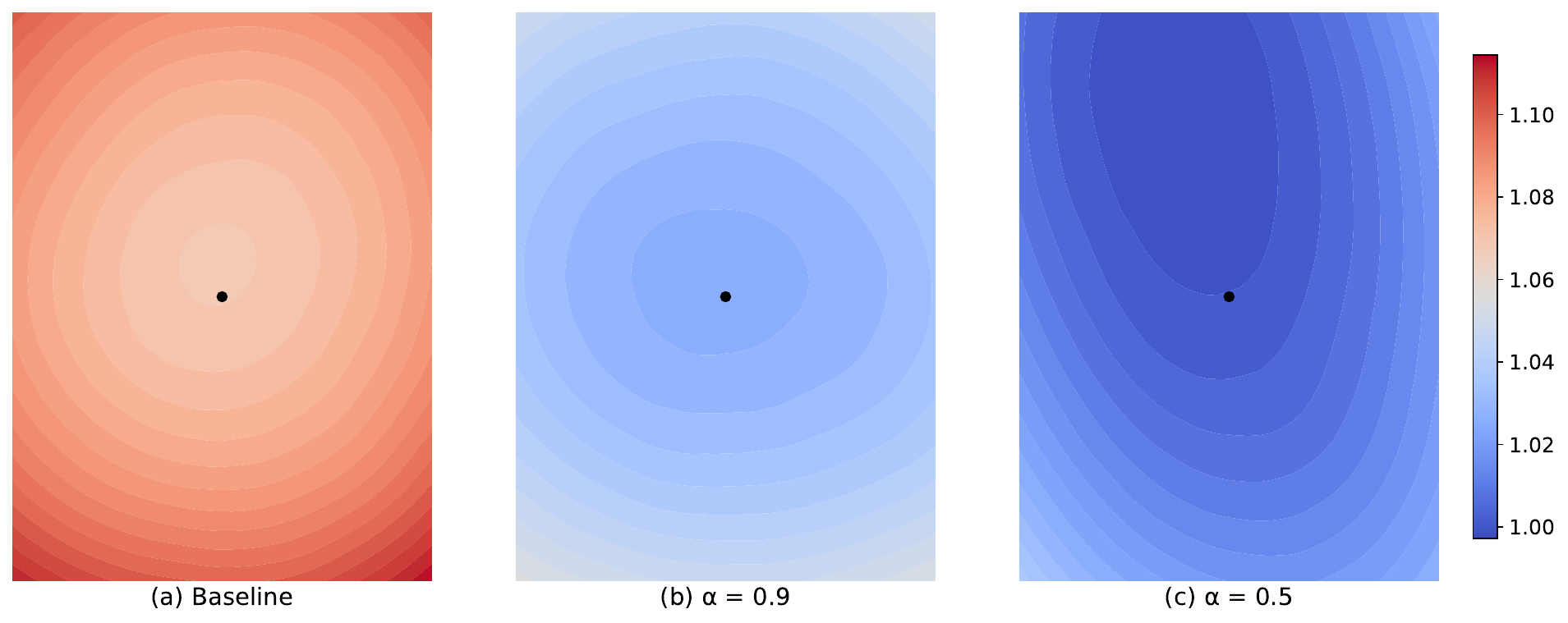}
     \caption{Two-dimensional visualizations of the training loss landscape around the final parameters for the baseline model and for models trained with asymmetric $\alpha$-skew JSD regularization at $\alpha=0.9$ and $\alpha=0.5$. All landscapes are displayed on a shared color scale.}
     \label{fig:loss_landscape_2d}
 \end{figure}

Figure~\ref{fig:loss_landscape_2d} presents two-dimensional slices of the training loss landscape around the final parameters for three models: an unregularized baseline, and models trained with asymmetric $\alpha$-skew JSD regularization for $\alpha=0.9$ and $\alpha=0.5$. All slices are evaluated on a common perturbation plane and displayed on the same color scale. The baseline exhibits the sharpest basin, while divergence-based regularization leads to progressively flatter local minima. In particular, the model trained with $\alpha=0.5$, corresponding to the strongest local curvature penalization within this family, displays the broadest and flattest basin. This qualitative behavior is consistent with the theoretical analysis, according to which stronger curvature penalization promotes flatter and less sharp local minima.

\end{document}

%% file: math_commands.tex
\usepackage{amsmath,amsfonts,bm}
\usepackage{amssymb}
\usepackage{amsthm}

\newtheorem{proposition}{Proposition}

\newtheorem{corollary}{Corollary}
\newtheorem{remark}{Remark}

\def\eqref#1{equation~\ref{#1}}

\def\1{\bm{1}}

\DeclareMathAlphabet{\mathsfit}{\encodingdefault}{\sfdefault}{m}{sl}
\SetMathAlphabet{\mathsfit}{bold}{\encodingdefault}{\sfdefault}{bx}{n}

